\documentclass[conference]{IEEEtran}
\IEEEoverridecommandlockouts
\usepackage{graphicx}
\usepackage{epstopdf}
\usepackage{epsfig}
\usepackage{xcolor}
\usepackage{float}
\usepackage{color,soul}
\usepackage{amsmath}
\usepackage{amsthm}
\usepackage{amssymb}
\usepackage{bbm}
\usepackage{array}
\usepackage{algorithm,algpseudocode}

\newcommand{\myb}[1]{\boldsymbol{#1}}
\newtheorem{prop}{Proposition}

\usepackage{cite}
\usepackage{amsmath,amssymb,amsfonts}
\usepackage{graphicx}
\usepackage{textcomp}
\usepackage{xcolor}
\usepackage{breqn}

\def\BibTeX{{\rm B\kern-.05em{\sc i\kern-.025em b}\kern-.08em
    T\kern-.1667em\lower.7ex\hbox{E}\kern-.125emX}}
    
\begin{document}

\title{MaxModShift: Model Privacy via Designed Shifts
\thanks{This work has been funded in part by one or more of the following grants: ONR N00014-22-1-2363, NSF CCF 2148313, ARO W911NF2410094, NSF CCF-2311653 and is supported in part by funds from federal agency and industry partners as specified in the Resilient \& Intelligent NextG Systems (RINGS) program, ARO W911NF1910269, and NSF Center for Pandemic Insights, DBI-2412522.}
}

\author{\IEEEauthorblockN{Nomaan A. Kherani}
\IEEEauthorblockA{
\textit{University of Southern California}\\
kherani@usc.edu}
\and
\IEEEauthorblockN{Urbashi Mitra}
\IEEEauthorblockA{
\textit{University of Southern California}\\
ubli@usc.edu} }

\maketitle

\begin{abstract}
Model learning by an eavesdropper is treated as an estimation problem in a federated environment. The Fisher Information Matrix for the eavesdropper's estimation problem is driven to singularity through a signaling design; this ensures that the eavesdropper cannot learn the model. {Herein, the innovation of prior designs is that model shifts
 are designed to maximize the difference in the model learned by Eve and the central server while satisfying a transmission power constraint for the agents.}  Two shift schemes are provided.  
 {MaxModShift outperforms a prior ModShift design while requiring lesser transmission power.} Compared to a noise injection scheme, MaxModShift performs better while requiring a lower bandwidth secret channel and a reduced average power consumption. 
\end{abstract}

\begin{IEEEkeywords}
Model privacy, federated learning, Fisher Information Matrix, estimation, distributed optimization.
\end{IEEEkeywords}

\section{Introduction}

In federated learning (see {\em e.g.} \cite{kairouz2021advances}), agents provide local model information to a global server while maintaining privacy of the local data resident at each agent. However, inferences about an agent's data can still be made by exploiting the shared local model updates, motivating the development of additional strategies to ensure data privacy \cite{privacyreview,secagg,fastsecagg,dp,tasnim2023approximating}; however these methods do not ensure privacy of the global model.

Recent work has begun to address the issue of {\bf model privacy}\cite{modelprivacy1,modelprivacy2,modelprivacy3}. In \cite{modelprivacy1}, the model is protected from the participating agents, but not eavesdroppers. In contrast, \cite{modelprivacy2} protects the model from eavesdroppers, but does not enable agent model learning.  While \cite{modelprivacy3} protects model privacy from eavesdroppers, a very constrained scenario is considered: only a link between a single agent the global server (amongst many) is compromised. 
%In this case, algorithms in which agents share model increments are inherently protected.  
Herein, we consider the problem of model privacy when {\bf all} uplinks between the agents and the global server are eavesdropped.

Our approach is inspired by the creation of statistically hard estimation problems for the eavesdropper through signal shaping \cite{li2024channel,li2024channelb, li2024optimized}. %In \cite{da2024guaranteed}, wireless communications are made private through randomizing the {\em structure} of the modulation.  
In \cite{li2024channel, li2024channelb, li2024optimized}, localization is made private by modifying the channel perceived by the eavesdropper.  The Fisher Information Matrix of the estimation problem undertaken by the eavesdropper is driven to singularity through transmitted signal precoding.  Such an approach is undertaken herein for the new problem of model privacy in federated learning or distributed optimization.  As in \cite{li2024channel,li2024channelb, li2024optimized}, only a modest amount of information is shared with the global server by the agents in order to achieve model privacy.

 \begin{figure}[t]
 
 \begin{center}
\includegraphics[width=0.5\textwidth]{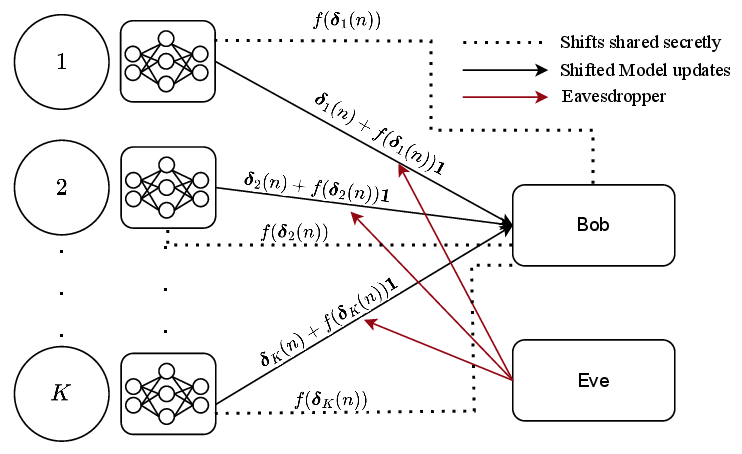}

\caption{System model for distributed optimization, model shift and eavesdropping.}

\label{fig:FL}
\end{center}

\end{figure}
{In our prior design, ModShift \cite{modshift}, an intentional model shift is introduced at each agent before transmission to the central server; however, the proposed designs were not optimized.} Model shift due to data heterogeneity has proven to be detrimental for the learning of global models \cite{shift1,shift2}. {In this work we design shifts that both induce statistical hardness for the eavesdropper as well as maximize the model shift under a power constraint.} {The proof techniques in \cite{modshift} and \cite{jsac} focus on FIM computation while in this paper, the proofs involve solving an optimization problem via the method of Lagrange multipliers.} As in \cite{modshift},  we use FedAvg \cite{mcmahan2017} as our exemplar learning scheme; however, our approach can be adapted to other federated or distributed optimization methods (see {\em e.g.} \cite{verma2023maximal}). A block version of ModShift was introduced in \cite{jsac}.

The main contributions of this paper are:
\begin{enumerate}
\item We re-derive the Fisher Information for shift learning problem for an eavesdropper for a more general shift model than that proposed in \cite{modshift}.  We further provide the conditions needed for singularity.
\item We provide a family of shift designs that achieve the desired singularity and prove properties of these designs.  %Although not explicitly considered herein, we can show that the new designs pass a test that can be applied at the eavesdropper to detect if model updates have been tampered with.
\item The new shift designs, which satisfy a power constraint, are designed to maximize the model shift, which correlates with a higher loss function for the eavesdropper.
 \item Several theoretical results are provided including the provable goodness of the method.
%\item  There is an inherent decision problem for our proposed strategy to select between two shift designs.  
%A regret bound is computed for this decision making problem.
    \item The proposed method is compared to ModShift \cite{modshift} and noise injection methods.  MaxModShift leads Eve to a worse performance than ModShift while requiring $24\%$ of the transmission power. Noise injection fails the tamper test and requires a greater bandwidth secret channel and further results in a significantly lower estimation loss for Eve than MaxModShift. 
\end{enumerate}
There are a number of technical results that are not included herein due to space limitations, but can be shown. In particular, the new schemes (MaxModShift) are shown to pass a convergence test at the eavesdropper and thus the eavesdropper cannot detect the model shifts. The algorithm for implementing the model shifts can also be proven to converge, following the approaches in \cite{modshift}.

\section{System Model}
\label{sec-sys}
Our system model from \cite{modshift} is reviewed.
 We consider a network consisting of $K$ agents that communicate with a central server (Bob) to collectively learn a global model $\boldsymbol{w}^{*} \in \mathbb{R}^{d}$. An unauthorized receiver (Eve) tries to learn $\myb{w}^{*}$ by eavesdropping on the uplink communication between the agents and the server.  Each agent has a local dataset $\myb{D}_{k}$ comprised of realizations of feature vectors, $\boldsymbol{x}_{k,i} \in \mathbb{R}^{d}$, and their corresponding labels $y_{k,i}$, where 
 $\myb{D}_{k} = \left [[\boldsymbol{x}_{k,1}^{T},y_{k,1}]^{T} , \cdots, [\boldsymbol{x}_{k,m_k}^{T},y_{k,m_{k}}]^{T} \right]$.  The global data set is thus,
$\myb{D} =  \left[\myb{D}_{1}, \cdots ,\myb{D}_{K}
\right] \in  \mathbb{R}^{(d+1) \times m},$ where $m = \sum_{k=1}^{K}m_{k}.$ 

 The function $l(\boldsymbol{w},\boldsymbol{x}_{k,i},y_{k,i})$ represents the loss of the $i$th data sample of agent $k$.
 Our aim is to find the $\boldsymbol{w}^{*}$ that minimizes the global loss function at the server
 \begin{align}
    \label{loss}
     F(\boldsymbol{w}) = \frac{1}{m}\sum_{k=1}^{K}\sum_{i=1}^{m_{k}}l(\boldsymbol{w},\boldsymbol{x}_{k,i},y_{k,i}).
 \end{align}
 The gradient computed by agent $k$ on dataset $\myb{D}_{k}$ with a weight $\myb{w}$ is given by,
\begin{equation}
    \myb{g}_{k}(\myb{w},\myb{D}_{k}) = \sum_{i=1}^{m_{k}} \frac{\nabla l(\myb{w},\myb{x}_{k,i},y_{k,i})}{m_{k}}.    
\end{equation}
We make no assumptions about the strategy by which the gradients are generated, thus our work applies to both neural network architectures as well model-based approaches.
We use FedAvg \cite{mcmahan2017} with a slight modification to find the parameter $w^{*}$ as presented in Algorithm (\ref{algo}). In iteration $n$, after performing $R$ steps of local gradient descent with weight $\myb{w}(n)$, the devices share the \emph {difference} ${\myb{\delta}_{k}(n)} = \myb{w}_{k,R}(n)-\myb{w}(n)$ between their resulting local model $\myb{w}_{k,R}(n)$ and the global model $\myb{w}(n)$ versus sharing $\myb{w}_{k,R}(n)$. This formulation introduces an initial model shift due to random initialization and necessitates that Eve eavesdrop in every communication round to attempt to reconstruct the model trajectory. 
\begin{algorithm}[h]
\caption{The $K$ clients are indexed by $k$; $R$ is the number of local epochs and $\eta$ is the learning rate.}
\begin{algorithmic}[h]
\State \textbf{Server executes:}
\State Initialize $\myb{w}(0)$
\For{each round $n = 0, 1, \dots N-1$}
    \For{each client $k$ \textbf{in parallel}}
        \State $\myb{w}_{k,0}(n) \gets \myb{w}(n)$
        \For{each local epoch $r$ from $0$ to $R-1$}
            \State $\myb{w}_{k,r+1}(n) = \myb{w}_{k,r}(n) - \eta g_{k}(\myb{w}_{k,r}(n),\myb{D}_{k})$
        \EndFor
        \State $\myb{\delta}_{k}(n) \gets \myb{w}_{k,R}(n)-\myb{w}(n)$
    \EndFor
    \State $\myb{w}(n+1) \gets \myb{w}(n) + \sum_{k=1}^{K}\frac{m_{k}}{m}\myb{\delta}_{k}(n)$
\EndFor
\end{algorithmic}
\label{algo}
\end{algorithm}

We assume that each agent uses an orthogonal channel for transmission of $\myb{\delta}_{k}(n)$ to Bob, such as orthogonal frequency-division multiplexing (OFDM) \cite{ofdmbook}. Under the assumption of white Gaussian noise across the sub-channels and flat fading with known channel state information for both Bob and Eve, we have that the effective received signals per orthogonal channel in iteration $n$ for each receiver is
\begin{eqnarray}
    \label{Bob1}
    {\underline{\myb{y}}_{k}^{u}(n)} & = &\myb{\delta}_{k}(n) + \underline{\myb{z}}_{k}^{u}(n),
\end{eqnarray}
where $\underline{\myb{z}}_{k}^{u}(n) \sim \mathcal{CN}(0,\frac{\sigma_{u}^{2}}{h_{k}^{u}(n)^{2}}\myb{I})$ is the distribution of the noise conditioned on the known channel state $h_{k}^{u}(n)$ and $u \in \{ B,E\}$. Both Bob and Eve wish to estimate the global model $w^{*}$ from their received signals.

In the sequel, we modify the transmitted signals, sharing this change with Bob over a secure channel \footnote{Note that communication privacy for a modest amount of shared information can be achieved via the methods in \cite{da2024guaranteed,li2024channel,li2024channelb, li2024optimized}.}, to mislead Eve to the incorrect model.

\section{Proposed Scheme: ModShift}
\label{modshiftprop}
Inspired by ModShift \cite{modshift}, we propose to introduce shifts to $\myb{\delta}_{k}(n)$ to degrade the {estimation and therefore learning performance} of the algorithm at Eve. With the addition of the shift, Eve estimates $\myb{\delta}_{k}(n)$ from 
\begin{align}
    \label{Eve}
    \underline{\myb{y}}_{k}^{E}(n) &  = \myb{\delta}_{k}(n) + \myb{\gamma}_{k}(n)^{T}\myb{\delta}_{k}(n)\myb{u} + \underline{\myb{z}}_{k}^{E}(n), \\
    & = \myb{S}_{k}(n)\myb{\delta}_{k}(n) + \underline{\myb{z}}_{k}^{E}(n).
\end{align}
where $\myb{S}_{k}(n) = \myb{I} + \myb{u}\myb{\gamma}_{k}(n)^{T}$,$\myb{\gamma}_{k}(n) \in R^{d}$ is a vector which the agents design, $\myb{u}$ is the shift direction vector and $\myb{\gamma}_{k}(n)^{T}\myb{\delta}_{k}(n)$ is the shift introduced by the agents.
Thus, Eve's update is 
\begin{align}
    \label{update}
    \myb{w}(n+1)^{E} = \myb{w}(n)^{E} + \sum_{k=1}^{K}\frac{m_{k}}{m}\underline{\myb{y}}_{k}^{E}(n)
\end{align}
In the sequel, we provide a design for $\myb{\gamma}_{k}(n)$ that yields a challenging estimation problem for Eve.  
After compensating for shifts, Bob's received signal remains as in Equation \eqref{Bob1}.
The \emph{model shift} experienced by Eve is given by $\lim_{n\rightarrow \infty} \myb{w}(n)^{E} - \myb{w}$; our goal is that this converges to a non-zero quantity.
%The goal is for Eve's estimate to converge to the incorrect model.  The difference between the true and estimated model is called the \emph{model shift}.
As in \cite{modshift},
we minimize the use of the secret channel by characterizing the shift by a scalar $\myb{\gamma}_{k}(n)^{T}\myb{\delta}_{k}(n)$ and $\myb{u}$ is agreed upon prior to the learning as in ModShift \cite{modshift}. Note that in our FIM computation, we assume Eve knows $\myb{u}$.
%The shift is a function of the gradient to be transmitted and is given by $f_{k,n}(\myb{\delta}_{k}(n))\mathbbm{1}$.
\section{Shift Design}
\label{shiftdes}
%As our shift design is generalized over that in \cite{modshift}, we need to recompute the FIM  for a general shift direction $\myb{u}$ and determine the conditions on $\myb{\gamma}_{k}(n)$ for the FIM of $\myb{\delta}_{k}(n)$ to be singular \footnote{Part of Proposition 1 is present in \cite{kherani_AI4G}. \um{this is vague}}.
We first review our result from \cite{kherani_AI4G} which determines the conditions on $\myb{\gamma}_{k}(n)$ for the FIM of $\myb{\delta}_{k}(n)$ to be singular.
\begin{prop}
\label{fimblock}
    The FIM $\myb{J}(\myb{\delta}_{k}(n))$ of $\myb{\delta}_{k}(n)$ for the signal model in Equation \eqref{Eve} is given by
    \begin{align}
        \myb{J}(\myb{\delta}_{k}(n)) & = \frac{2h_{k}^{E}(n)^{2}}{\sigma_{E}^{2}}\myb{S}_{k}(n)^{T}\myb{S}_{k}(n).
    \end{align}
    $\myb{J}(\myb{\delta}_{k}(n))$ is singular when :
        $\myb{\gamma}_{k}(n)^{T}\myb{u} = -1.
$   
\end{prop}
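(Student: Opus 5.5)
The FIM will come from the standard formula for a linear Gaussian model, and singularity will follow from the matrix determinant lemma applied to the rank-one update $\myb{S}_{k}(n)=\myb{I}+\myb{u}\myb{\gamma}_{k}(n)^{T}$.

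First I fix the observation model in Equation \eqref{Eve}. Conditioned on the known channel $h_{k}^{E}(n)$, we have $\underline{\myb{y}}_{k}^{E}(n)\sim\mathcal{CN}(\myb{S}_{k}(n)\myb{\delta}_{k}(n),\sigma^{2}\myb{I})$ with $\sigma^{2}=\sigma_{E}^{2}/h_{k}^{E}(n)^{2}$. Here $\myb{S}_{k}(n)$ is treated as known to Eve: $\myb{u}$ is assumed known, and $\myb{\gamma}_{k}(n)$ is a fixed design vector for this round.

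The log-likelihood is
\begin{align}
\ln p = \text{const} - \frac{1}{\sigma^{2}}\big\|\underline{\myb{y}}_{k}^{E}(n)-\myb{S}_{k}(n)\myb{\delta}_{k}(n)\big\|^{2}.
\end{align}
Its gradient with respect to the real parameter $\myb{\delta}_{k}(n)$ is
\begin{align}
\frac{2}{\sigma^{2}}\,\Re\!\left\{\myb{S}_{k}(n)^{T}\underline{\myb{z}}_{k}^{E}(n)\right\}.
\end{align}
Equivalently, the Slepian--Bangs formula for a complex Gaussian mean gives $\myb{J}=\frac{2}{\sigma^{2}}\Re\{\myb{H}^{H}\myb{H}\}$ with Jacobian $\myb{H}=\myb{S}_{k}(n)$. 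Since $\myb{S}_{k}(n)$ is real, this reduces to $\frac{2h_{k}^{E}(n)^{2}}{\sigma_{E}^{2}}\myb{S}_{k}(n)^{T}\myb{S}_{k}(n)$. In the score-based route, I would check the constant using $\mathbb{E}[\Re\{\underline{\myb{z}}\}\Re\{\underline{\myb{z}}\}^{T}]=\frac{\sigma^{2}}{2}\myb{I}$.

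For singularity, note that $\myb{J}$ is a positive scalar times $\myb{S}_{k}(n)^{T}\myb{S}_{k}(n)$. It is therefore singular exactly when $\myb{S}_{k}(n)$ is singular, because $\det(\myb{S}^{T}\myb{S})=\det(\myb{S})^{2}$. The matrix determinant lemma gives
\begin{align}
\det(\myb{I}+\myb{u}\myb{\gamma}_{k}(n)^{T})=1+\myb{\gamma}_{k}(n)^{T}\myb{u},
\end{align}
which vanishes iff $\myb{\gamma}_{k}(n)^{T}\myb{u}=-1$. As a concrete check, when this holds we have $\myb{S}_{k}(n)\myb{u}=\myb{u}(1+\myb{\gamma}_{k}(n)^{T}\myb{u})=\myb{0}$. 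So $\myb{u}$ spans the null space, and Eve's information about $\myb{\delta}_{k}(n)$ along $\myb{u}$ is zero.

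The only delicate point is the leading constant: the factor $2$ and the placement of $h_{k}^{E}(n)^{2}$ depend on the real-versus-complex convention. I would settle it with the explicit score computation above rather than quoting a formula. A further subtlety is that $\myb{\gamma}_{k}(n)$ might later be designed as a function of $\myb{\delta}_{k}(n)$. I would state that the FIM is computed with $\myb{S}_{k}(n)$ held fixed, consistent with Eve knowing the shift structure.
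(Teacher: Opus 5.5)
Your proposal is correct. The score/Slepian--Bangs computation with $\sigma^{2}=\sigma_{E}^{2}/h_{k}^{E}(n)^{2}$ gives exactly $\frac{2h_{k}^{E}(n)^{2}}{\sigma_{E}^{2}}\myb{S}_{k}(n)^{T}\myb{S}_{k}(n)$, and $\det(\myb{S}_{k}(n)^{T}\myb{S}_{k}(n))=(1+\myb{\gamma}_{k}(n)^{T}\myb{u})^{2}$ yields the singularity condition, in fact as an if-and-only-if. The paper gives no proof here and imports the result from its earlier work, so your argument is the natural way to establish it, and holding $\myb{S}_{k}(n)$ fixed matches the paper's remark that Proposition~\ref{fimblock} does not treat $\myb{\gamma}_{k}(n)$ as a function of $\myb{\delta}_{k}(n)$.
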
 
Proposition \ref{fimblock} does not consider $\myb{\gamma}_{k}(n)$ to be a function of $\myb{\delta}_{k}(n)$. However, allowing $\myb{\gamma}_{k}(n)$ to be a function of $\myb{\delta}_{k}(n)$  will enable improved designs. 
%\no{In the workshop submission, we have found conditions under which the block FIM is singular. Setting the block size to 1 will give us the result from the current proposition. In the ModShift paper, $\myb{u}$ was equal to $\mathbbm{1}$ and so is a special case of the above proposition}
%gives us more freedom in choosing a $\myb{\gamma}_{k}(n)$ that can maximize \no{any function of the model shift}. 
We propose a design based on the following observation.
Given $\mu$, a strongly convex loss function $F(\cdot)$ satisfies,
\begin{align}
    F(\myb{y}) \geq F(\myb{x}) + \nabla F(\myb{x})^{T}(\myb{y}-\myb{x}) + \frac{\mu}{2}||\myb{y}-\myb{x}||^{2}.
\end{align}
In the above, we set $\myb{x} = \myb{w}^{*}$, which is Bob's global optima and $\myb{y} =  \lim_{n \rightarrow \infty} \myb{w}^{Eve}(n)$. We see that the difference in Eve's and Bob's loss is lower bounded by the squared norm of the model shift. {However, due to the gradient term's dependence  on $\myb{x}$, maximizing the model shift may not always lead to an increase in Eve's loss.}
We will design functions of 
$\myb{\delta}_{k}(n)$ that $\myb{\gamma}_{k}(n)$ that yield the same FIM and condition for singularity as given in Proposition \ref{fimblock}, {while striving to maximize the model shift for Eve.}

\begin{prop}
    \label{h}
    We let $\myb{\gamma}_{k}(n) = h(\myb{\delta}_{k}(n))$, where $h(\cdots): \mathbb{R}^{d}\rightarrow\mathbb{R}^{d}$. The FIM conditions remain unchanged if $h(\cdots)$ satisfies
    \begin{align}
    \label{pde}
    \myb{\delta}_{k}(n)^{T}\nabla h_{k,n}(\myb{\delta}_{k}(n))  = 0.
\end{align}
\end{prop}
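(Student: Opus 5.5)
Write $\myb{\delta}=\myb{\delta}_{k}(n)$ and $\myb{\gamma}=h(\myb{\delta})$. Under the signal model in Equation \eqref{Eve}, Eve's observation is Gaussian with covariance independent of $\myb{\delta}$ and mean
\begin{align*}
\myb{m}(\myb{\delta}) = \myb{\delta} + \big(h(\myb{\delta})^{T}\myb{\delta}\big)\myb{u}.
\end{align*}
The plan is to recompute the FIM through the standard Gaussian formula. Since the covariance is $\frac{\sigma_E^2}{h_k^E(n)^2}\myb{I}$ (complex), the FIM is $\frac{2h_{k}^{E}(n)^{2}}{\sigma_{E}^{2}}\,\myb{M}^{T}\myb{M}$, where $\myb{M}=\partial \myb{m}/\partial\myb{\delta}^{T}$ is the Jacobian of the mean. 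This is the same computation underlying Proposition \ref{fimblock}, where $\myb{\gamma}$ was constant and $\myb{M}=\myb{S}_{k}(n)$. So the whole task is to show that $\myb{M}=\myb{I}+\myb{u}h(\myb{\delta})^{T}$ exactly when \eqref{pde} holds. Once that is shown, the FIM expression and the singularity condition $h(\myb{\delta})^{T}\myb{u}=-1$ carry over verbatim from Proposition \ref{fimblock}.

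The key step is differentiating the scalar shift $s(\myb{\delta})=h(\myb{\delta})^{T}\myb{\delta}$ with the product rule:
\begin{align*}
\nabla_{\myb{\delta}} s = h(\myb{\delta}) + \big(\nabla h(\myb{\delta})\big)^{T}\myb{\delta},
\end{align*}
with $\nabla h$ the Jacobian of $h$. This gives
\begin{align*}
\myb{M} = \myb{I} + \myb{u}\Big(h(\myb{\delta})^{T} + \myb{\delta}^{T}\nabla h(\myb{\delta})\Big).
\end{align*}
Imposing $\myb{\delta}^{T}\nabla h(\myb{\delta})=\myb{0}^{T}$, which is condition \eqref{pde} read as a row vector, kills the extra term. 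We are left with $\myb{M}=\myb{I}+\myb{u}h(\myb{\delta})^{T}=\myb{S}_{k}(n)$ evaluated at $\myb{\gamma}=h(\myb{\delta})$. Substituting then gives $\myb{J}(\myb{\delta})=\frac{2h_{k}^{E}(n)^{2}}{\sigma_{E}^{2}}\myb{S}_{k}(n)^{T}\myb{S}_{k}(n)$.

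For singularity, I would reuse the argument of Proposition \ref{fimblock} via the matrix determinant lemma: $\det(\myb{I}+\myb{u}\myb{\gamma}^{T})=1+\myb{\gamma}^{T}\myb{u}$. This vanishes iff $h(\myb{\delta})^{T}\myb{u}=-1$, and in that case $\myb{u}$ spans the null space direction.

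The main obstacle is the careful bookkeeping of the Jacobian convention. The paper writes $\nabla h$ for a vector-valued map, and \eqref{pde} must be interpreted as $\sum_i \delta_i\, \partial h_i/\partial \delta_j=0$ for every $j$, i.e.\ $\big(\nabla h\big)^{T}\myb{\delta}=\myb{0}$ in the convention where $\nabla h$ has entries $\partial h_i/\partial\delta_j$. I would state that convention explicitly so the product-rule term matches the condition.

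A minor point is that the FIM computation treats the complex noise with real parameters, so the factor $2$ appears. I would note that this is unchanged from Proposition \ref{fimblock}, and that $h$ must be differentiable for the FIM to exist.
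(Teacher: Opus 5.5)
Your proof is correct and matches the argument the paper relies on. The paper states Proposition~\ref{h} without a written proof, but it rests on the same Jacobian-of-the-mean FIM computation as Proposition~\ref{fimblock}: the product rule on $h(\myb{\delta})^{T}\myb{\delta}$ adds the term $\myb{u}\,\myb{\delta}^{T}\nabla h(\myb{\delta})$ to the Jacobian, condition \eqref{pde} removes it so the Jacobian is exactly $\myb{S}_{k}(n)$, and the FIM expression and the singularity condition $h(\myb{\delta})^{T}\myb{u}=-1$ carry over. Your reading of \eqref{pde} as $\sum_i \delta_i\,\partial h_i/\partial\delta_j=0$ for every $j$ is the right one, since that term, not the directional derivative $\sum_j \delta_j\,\partial h_i/\partial\delta_j$, is what the product rule generates.
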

Proposition \ref{h} provides a condition on a family of functions of $\myb{\gamma}_{k}(n) = h(\myb{\delta}_{k}(n)$  for the FIM in proposition \ref{fimblock} to remain unchanged.

\section{MaxModShift}
\label{maxmod}
{While the shift strategies proposed in \cite{modshift} do introduce model shifts, the resulting magnitude of the shift is not controlled. In fact, when applying ModShift to the MNIST data set for a classification problem \cite{mnist}, we see that the gradient-driven shifts may not meaningfully change the loss for certain shift schemes. Herein, we devise strategies that enable stronger control of the model shift, subject to a power constraint at the agents.}

Let $N$ be the total number of rounds of learning. {We define the operator $\mathbf{H}(\cdot)\in {\cal R}^{K,N}$ such that $\mathbf{H}_{k,n}(\myb{\delta}_{k}(n)) = h_{k,n} (\myb{\delta}_{k}(n))$.  We propose the following optimization problem:}
%\begin{align}
 %   \label{objective}
  %  \arg \max_{\myb{\gamma}_{1}(1) \cdots \myb{\gamma}_{K}(N)} & \frac{1}{K^{2}}\left|\left|\sum_{k=1}^{K}\sum_{n=1}^{N}\myb{\gamma}_{k}(n)^{T}\myb{\delta}_{k}(n)\myb{u}\right|\right|^{2}\\
   % \label{c1}
    %\text{s.t. } \myb{\gamma}_{k}(n)^{T}\myb{u} & = -1 \hspace{5pt} \forall k,n,\\
    %\label{c2}
    %\left|\left|\myb{S}_{k}(n)\myb{\delta}_{k}(n)\right|\right|^{2} & \leq \alpha||\myb{\delta}_{k}(n)||^{2}, \forall k,n.
%\end{align}
\begin{align}
  \mathcal{P}: \arg \max_{\mathbf{H}(\cdot)} \frac{1}{m^{2}}\left(\sum_{k=1}^{K}\right. & \left.  \sum_{n=1}^{N} m_{k}h_{k,n}(\myb{\delta}_{k}(n))^{T}\myb{\delta}_{k}(n)\right)^{2}\left|\left|\myb{u}\right|\right|^{2} \nonumber
\end{align}
    \vspace*{-0.2in}
\begin{align}
    \label{c1}
    \text{s.t. } h_{k,n}(\myb{\delta}_{k}(n))^{T}\myb{u} & = -1 \hspace{5pt} \forall k,n\\
    \label{c2}
    \myb{\delta}_{k}(n)^{T}\nabla h_{k,n}(\myb{\delta}_{k}(n))  & = 0,  \forall k,n,\\
    \label{c3}
    \left|\left|\myb{S}_{k}(n)\myb{\delta}_{k}(n)\right|\right|^{2} \hspace{-2.5pt} + \hspace{-2.5pt} \left(h_{k,n}(\myb{\delta}_{k}(n))^{T}\myb{\delta}_{k}(n)\right)^{2}\hspace{-2.5pt} &\leq \alpha||\myb{\delta}_{k}(n)||^{2},\forall k,n.
\end{align}

From \cite{jsac}, we know that $\sigma^{2}_{max}(\myb{S}_{k}(n)) = \max\left(1,||\myb{u}_{k}(n)||^{2}||\myb{\gamma}_{k}(n)||^{2}\right)$. Thus we propose the following modified optimization problem, $ \mathcal{P}_{1}$:
\begin{align}
    \label{objective}
    \mathcal{P}_{1}: \arg \max_{\mathbf{H}(\cdot)} \frac{1}{m^{2}}\left(\sum_{k=1}^{K} \right. & \left.\sum_{n=1}^{N}m_{k}h_{k,n}(\myb{\delta}_{k}(n))^{T}\myb{\delta}_{k}(n)\right)^{2}\left|\left|\myb{u}\right|\right|^{2}\\
    \label{c4}
    \text{s.t. } h_{k,n}(\myb{\delta}_{k}(n))^{T}\myb{u} & = -1 \hspace{5pt} \forall k,n,\\
    \label{c5}
    \left|\left|h_{k,n}(\myb{\delta}_{k}(n))\right|\right|^{2} & = \frac{\alpha}{1+||\myb{u}||^{2}}, \forall k,n.
\end{align}
Note that the new power constraint is stronger than that of Equation (\ref{c3}) since,
\begin{align*}
    \left|\left|\myb{S}_{k}(n)\myb{\delta}_{k}(n)\right|\right|^{2} + (h_{k,n}(\myb{\delta}_{k}(n))^{T}\myb{\delta}_{k}(n))^{2} \leq\\ \sigma_{max}^{2}(\myb{S}_{k}(n))\left|\left|\myb{\delta}_{k}(n)\right|\right|^{2} + \left(||h_{k,n}(\myb{\delta}_{k}(n))||||\myb{\delta}_{k}(n)||\right)^{2} = \\\left|\left|h_{k,n}(\myb{\delta}_{k}(n))\right|\right|^{2}\left|\left|\myb{\delta}_{k}(n)\right|\right|^{2}\left(\left|\left|\myb{u}\right|\right|^{2}+1\right).
\end{align*}

In the current design, the shifts are not coupled across time, thus 
% Note that since the shifts are not coupled across time, i.e. the choice of $\myb{\gamma}_{k}(n)$ does not affect future updates since Bob compensates for the shifts,
the objective in $\mathcal{P}_{1}$ can be separated into $KN$ optimization problems where agent $k$ solves $N$ optimization problems to find $\mathbf{H}_{k,1}(\cdot) \cdots \mathbf{H}_{k,N}(\cdot).$ 
%In order to maximize the square of the sum of the shifts which is the objective in $\mathcal{P}_{1}$, either every shift $h_{k,n}(\myb{\delta}_{k}(n))^{T}\myb{\delta}_{k}(n)$ is maximized or every shift is minimized. 
To further simplify the optimization tasks, we pose two sub-optimal problems for which closed form solutions exist and which will yield improved shift designs.  For round $n$ and agent $k$, the optimization problems are,
\vspace*{-0.1in}

\begin{align}
    \label{obj1}
    \mathcal{P}^{max}_{k,n}:\arg \max_{\mathbf{H}(\cdot)}  & \; h_{k,n}(\myb{\delta}_{k}(n))^{T}\myb{\delta}_{k}(n)\\
    \text{s.t. } h_{k,n}(\myb{\delta}_{k}(n))^{T}\myb{u} & = -1,\\
    \left|\left|h_{k,n}(\myb{\delta}_{k}(n))\right|\right|^{2} & = \frac{\alpha}{1+\left|\left|\myb{u}\right|\right|^{2}}.
        \end{align}
      \begin{align}
    \label{obj2}
    \mathcal{P}^{min}_{k,n}:\arg \min_{\mathbf{H}(\cdot)}  & \; h_{k,n}(\myb{\delta}_{k}(n))^{T}\myb{\delta}_{k}(n)\\
    \text{s.t. } h_{k,n}(\myb{\delta}_{k}(n))^{T}\myb{u} & = -1,\\
    \left|\left|h_{k,n}(\myb{\delta}_{k}(n))\right|\right|^{2} & = \frac{\alpha}{1+\left|\left|\myb{u}\right|\right|^{2}}.
\end{align}
Note that we have dropped $m_{k}$ since $m_{k}\geq0 \forall k$ which implies that it will not change the expression obtained for $h_{k,n}(\myb{\delta}_{k}(n))$.

\begin{prop}
    \label{opt}
    The solutions to $\mathcal{P}^{max}_{k,n}$ and $\mathcal{P}^{min}_{k,n}$ are given by
    \begin{align}
         \myb{\gamma}_{k}^{\text{max}}(n) & \doteq h_{k,n}^{max}(\myb{\delta}_{k}(n)) = \chi_{k}(n) \myb{\delta}_{k}(n) -\\ &(1+(\myb{\delta}_{k}(n)^{T}\myb{u})\chi_{k}(n))\frac{\myb{u}}{\left|\left|\myb{u}\right|\right|^{2}},\\
         \myb{\gamma}_{k}^{\text{min}}(n) & \doteq h_{k,n}^{min}(\myb{\delta}_{k}(n)) = -\chi_{k}(n) \myb{\delta}_{k}(n) - \\&(1-(\myb{\delta}_{k}(n)^{T}\myb{u})\chi_{k}(n))\frac{\myb{u}}{\left|\left|\myb{u}\right|\right|^{2}}\\
\mbox{where} \;\;\; 
        \chi_{k}(n)& = \sqrt{\frac{\alpha'-1}{\left|\left|\myb{u}\right|\right|^{2}||\myb{\delta}_{k}(n)||^{2}-(\myb{\delta}_{k}(n)^{T}\myb{u})^{2}}} \;\;\; \mbox {and}\\
        & \alpha' = \alpha \frac{||\myb{u}||^{2}}{1+||\myb{u}||^{2}}.   
    \end{align}
    We require $\alpha' \geq 1$ for $\chi_{k}(n)$ to be real.
    %The corresponding minimum and maximum values of the objectives in $\mathcal{P}_{k,n}^{max}$ and $\mathcal{P}_{k,n}^{min}$ are  
    % \begin{align}
    %     (\myb{\gamma}_{k}^{\text{max}}(n))^{T}\myb{\delta}_{k}(n) & = -\frac{(\myb{\delta}_{k}(n)^{T}\myb{u})}{||\myb{u}||^{2}} + \frac{\alpha'-1}{\chi_{k}(n)||\myb{u}||^{2}},\\
    %     (\myb{\gamma}_{k}^{\text{min}}(n))^{T}\myb{\delta}_{k}(n) & = -\frac{(\myb{\delta}_{k}(n)^{T}\myb{u})}{||\myb{u}||^{2}} - \frac{\alpha'-1}{\chi_{k}(n)||\myb{u}||^{2}},
    % \end{align}
    Furthermore, both $\myb{\gamma}_{k}^{\text{max}}(n)$ and $\myb{\gamma}_{k}^{\text{min}}(n)$ satisfy the condition given in Equation \ref{pde}.
\end{prop}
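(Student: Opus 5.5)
Fix $k,n$ and write $\myb{\delta}=\myb{\delta}_{k}(n)$, $\myb{\gamma}=h_{k,n}(\myb{\delta})$, $\beta=\alpha/(1+||\myb{u}||^{2})$. Pointwise in $\myb{\delta}$, $\mathcal{P}^{max}_{k,n}$ and $\mathcal{P}^{min}_{k,n}$ are then problems over a vector $\myb{\gamma}\in\mathbb{R}^{d}$: extremize the linear function $\myb{\gamma}^{T}\myb{\delta}$ subject to the hyperplane constraint $\myb{\gamma}^{T}\myb{u}=-1$ and the sphere constraint $||\myb{\gamma}||^{2}=\beta$.

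I will solve this with Lagrange multipliers. The Lagrangian is
\begin{align*}
L=\myb{\gamma}^{T}\myb{\delta}-\lambda_{1}(\myb{\gamma}^{T}\myb{u}+1)-\lambda_{2}(||\myb{\gamma}||^{2}-\beta).
\end{align*}
Stationarity gives $\myb{\gamma}=(\myb{\delta}-\lambda_{1}\myb{u})/(2\lambda_{2})$, so set $\chi=1/(2\lambda_{2})$. The hyperplane constraint fixes $\lambda_{1}$, and substituting back yields
\begin{align*}
\myb{\gamma}=\chi\myb{\delta}-(1+\chi\,\myb{\delta}^{T}\myb{u})\frac{\myb{u}}{||\myb{u}||^{2}}.
\end{align*}
This splits $\myb{\gamma}$ into the fixed component $-\myb{u}/||\myb{u}||^{2}$ along $\myb{u}$ and the component $\chi\myb{\delta}_{\perp}$, where $\myb{\delta}_{\perp}=\myb{\delta}-(\myb{\delta}^{T}\myb{u})\myb{u}/||\myb{u}||^{2}$. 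Pythagoras then gives
\begin{align*}
||\myb{\gamma}||^{2}=\frac{1}{||\myb{u}||^{2}}+\chi^{2}||\myb{\delta}_{\perp}||^{2},
\end{align*}
with $||\myb{\delta}_{\perp}||^{2}=(||\myb{u}||^{2}||\myb{\delta}||^{2}-(\myb{\delta}^{T}\myb{u})^{2})/||\myb{u}||^{2}$. Setting this equal to $\beta$ gives
\begin{align*}
\chi^{2}=\frac{\beta||\myb{u}||^{2}-1}{||\myb{u}||^{2}||\myb{\delta}||^{2}-(\myb{\delta}^{T}\myb{u})^{2}}=\frac{\alpha'-1}{||\myb{u}||^{2}||\myb{\delta}||^{2}-(\myb{\delta}^{T}\myb{u})^{2}}.
\end{align*}
Hence $\alpha'\geq1$ is exactly the condition for the feasible set (a sphere intersected with a hyperplane) to be nonempty.

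Of the two stationary points $\chi=\pm\chi_{k}(n)$, the objective equals
\begin{align*}
\myb{\gamma}^{T}\myb{\delta}=-\frac{\myb{\delta}^{T}\myb{u}}{||\myb{u}||^{2}}+\chi||\myb{\delta}_{\perp}||^{2},
\end{align*}
which is increasing in $\chi$. So $+\chi_{k}(n)$ gives the maximum and $-\chi_{k}(n)$ the minimum; substituting $-\chi_{k}(n)$ reproduces $\myb{\gamma}^{min}$ as stated. Both are global optima because a linear function on a compact sphere attains its extrema, and these two points are the only critical points. Alternatively, Cauchy–Schwarz on $\myb{\delta}_{\perp}$ gives the same conclusion directly. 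I will note that the degenerate case $\myb{\delta}\parallel\myb{u}$, where the denominator vanishes and the objective is constant, is excluded.

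The remaining step, and the one I expect to be the main obstacle, is verifying the condition in Equation \ref{pde}, i.e.\ that the Jacobian of $h$ annihilates $\myb{\delta}$ in the sense $\myb{\delta}^{T}\nabla h(\myb{\delta})=0$. The key observation is that $h$ is a positively homogeneous function of degree $0$ in $\myb{\delta}$. Under $\myb{\delta}\to t\myb{\delta}$ with $t>0$, $\chi_{k}(n)$ scales as $1/t$, so $\chi\myb{\delta}$ and $\chi\,\myb{\delta}^{T}\myb{u}$ are invariant, and hence $h(t\myb{\delta})=h(\myb{\delta})$. Differentiating in $t$ at $t=1$, which is Euler's identity for degree-$0$ homogeneous functions, gives $\nabla h(\myb{\delta})^{T}\myb{\delta}=0$ componentwise. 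I will need to check that this matches the index convention in Equation \ref{pde}, namely that $\myb{\delta}^{T}\nabla h$ means the directional derivative of each component of $h$ along $\myb{\delta}$. If instead the transpose convention is intended, I would fall back on explicitly differentiating $h(\myb{\delta})$ and checking the contraction directly, which is routine but messy.
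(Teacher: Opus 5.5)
Your treatment of the optimization is correct and follows the same Lagrange-multiplier route as the paper's proof sketch. The split $\myb{\gamma}=\chi\myb{\delta}_{\perp}-\myb{u}/\|\myb{u}\|^{2}$, the identification of $\alpha'\geq 1$ with nonemptiness of the feasible set, and the compactness/Cauchy--Schwarz argument for global optimality are all sound. They are also more explicit than the paper, which does not say how Equation~\eqref{pde} is checked at all.

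The convention question you left open at the end is substantive, not cosmetic. Write $(\myb{J}_h)_{ij}=\partial h_i/\partial\delta_j$. Euler's identity for your degree-$0$ homogeneous $h$ gives $\myb{J}_h\myb{\delta}=\myb{0}$. The contraction that Proposition~\ref{h} actually needs is the transposed one. The shift is $s(\myb{\delta})=h(\myb{\delta})^{T}\myb{\delta}$ with $\nabla s=h+\myb{J}_h^{T}\myb{\delta}$, so Eve's FIM keeps the form $\myb{S}^{T}\myb{S}$ exactly when $\myb{J}_h^{T}\myb{\delta}=\myb{0}$. Degree-$0$ homogeneity alone does not imply this; for example, $h(\myb{\delta})=(\delta_2/\|\myb{\delta}\|,0)^{T}$ in $\mathbb{R}^{2}$ fails it.

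You do not need the messy explicit differentiation to close this. Let $\myb{P}=\myb{I}-\myb{u}\myb{u}^{T}/\|\myb{u}\|^{2}$. Then $\|\myb{u}\|^{2}\|\myb{\delta}\|^{2}-(\myb{\delta}^{T}\myb{u})^{2}=\|\myb{u}\|^{2}\|\myb{P}\myb{\delta}\|^{2}$. So, taking the upper sign for $h_{k,n}^{max}$ and the lower sign for $h_{k,n}^{min}$,
\[
h(\myb{\delta})=\pm\frac{\sqrt{\alpha'-1}}{\|\myb{u}\|}\,\frac{\myb{P}\myb{\delta}}{\|\myb{P}\myb{\delta}\|}-\frac{\myb{u}}{\|\myb{u}\|^{2}}=\nabla\phi(\myb{\delta}),
\]
where
\[
\phi(\myb{\delta})=\pm\frac{\sqrt{\alpha'-1}}{\|\myb{u}\|}\,\|\myb{P}\myb{\delta}\|-\frac{\myb{u}^{T}\myb{\delta}}{\|\myb{u}\|^{2}}.
\]
Hence $\myb{J}_h=\nabla^{2}\phi$ is symmetric, and your $\myb{J}_h\myb{\delta}=\myb{0}$ immediately gives $\myb{J}_h^{T}\myb{\delta}=\myb{0}$. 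Equivalently, $\phi$ is $1$-homogeneous, so $h^{T}\myb{\delta}=\phi$ and $\nabla(h^{T}\myb{\delta})=h$. With this observation, Equation~\eqref{pde} holds under either reading of $\nabla h$, and your proof is complete.
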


%\begin{proof}
 %   Refer Appendix \ref{optproof}
%\end{proof}

\emph{Proof Sketch:} We use the method of Lagrange multipliers to solve the optimization problems $\mathcal{P}_{k,n}^{\text{max}}$ and $\mathcal{P}_{k,n}^{\text{min}}$. We note that $\alpha \geq 1$ and from the Cauchy-Schwarz inequality, $\left|\left|\myb{u}\right|\right|^{2}||\myb{\delta}_{k}(n)||^{2}-(\myb{\delta}_{k}(n)^{T}\myb{u})^{2} \geq 0$. Algebraic manipulations yield the condition for the shift to be zero:
%We now check if this shift can be $0$. For the shift to be $0$,
\begin{align}
%    (\myb{\delta}_{k}(n)^{T}\myb{u})^{2} & = \left(\alpha - 1 \right)\left(\left|\left|\myb{u}\right|\right|^{2}||\myb{\delta}_{k}(n)||^{2}-(\myb{\delta}_{k}(n)^{T}\myb{u})^{2}\right),\\
    \frac{(\myb{\delta}_{k}(n)^{T}\myb{u})^{2}}{\left|\left|\myb{u}^{2}\right|\right|||\myb{\delta}_{k}(n)||^{2}} & = 1-\frac{1}{\alpha'}.
\end{align}
We may increase or decrease $\alpha$ to ensure that the above does not occur. 

Since the shifts in each round depend on the update, $\myb{\delta}_{k}(n)$ and $\myb{u}$, we do not know \emph{a priori} whether the solution from ${\cal P}^{min}$ or ${\cal P}^{max}$ 
yields the larger magnitude shift, as this is governed by the angle between $\myb{\delta}_{k}(n)$ and $\myb{u}$.  To this end, we consider one or the other. Thus either all agents use $\myb{\gamma}_{k}^{\text{max}}(n) \hspace{5pt} \forall n$ or use $\myb{\gamma}_{k}^{\text{min}}(n) \hspace{5pt} \forall n$.

\section{Simulation Results}
\label{results}
We consider a linear regression problem on a synthetic dataset in a federated learning setup with $100$ independent agents communicating with Bob. The data vectors $\myb{x}_{k,i}$ are of dimension $d = 60$ and are Gaussian distributed with mean $0$ and identity covariance matrix. The weight vector $\myb{w} = [1,2, \cdots ,d]^{T}$ is fixed and labels for each data vector are generated as 
$y_{k,i} = \myb{w}^{T}\myb{x}_{k,i} + n_{k,i}$
where $n_{k,i}$ is a zero-mean Gaussian random variable with standard deviation $0.1$ for all $k,i$. Each agent has a dataset of size $1,000$. We use the mean square error loss function for which $l(\boldsymbol{w},\boldsymbol{x}_{k,i},y_{k,i})$ in Equation \eqref{loss} is given by 
\begin{align}
    l(\boldsymbol{w},\boldsymbol{x}_{k,i},y_{k,i}) = (\myb{w}(n)^{T}\myb{x}_{k,i}-y_{k,i})^{2}.
\end{align}
We set $\myb{u} = \mathbf{1}, \frac{\sigma_{B}^{2}}{(h_{k}^{B}(n))^{2}} = \frac{\sigma_{E}^{2}}{(h_{k}^{E}(n))^{2}} = 0.1 \hspace{5pt} \forall k,n$ and $\myb{w}^{Bob}(0) = \myb{w}^{Eve}(0) = \myb{0}$ where $\myb{0} \in \mathbb{R}^{d}$ is the zero vector. The learning rate is $\eta=0.005$ and the number of local gradient rounds is $R = 10$. In order to compare MaxModShift with other schemes, we define
\begin{align}
    P_{avg} = \frac{\sum_{n=1}^{N}\sum_{k=1}^{K}\left(\left|\left|\myb{\delta}_{k}^{E}(n)\right|\right|^{2} + s_{k,n}^{2}\right)}{\sum_{n=1}^{N}\sum_{k=1}^{K}\left|\left|\myb{\delta}_{k}(n)\right|\right|^{2}},
\end{align}
where $\myb{\delta}_{k}^{E}(n)$ is the shifted difference and $s_{k,n}$ is the scalar shared through the secret channel. Note that $P_{avg} \leq \alpha$ for MaxModShift and is a less constrained measure of power when compared to the power constraint in Equation (\ref{c3}). 

We compare MaxModShift to a noise injection scheme where each agent adds a noise vector to the differences $\myb{\delta}_{k}(n)$.  {The noise injection scheme is inspired by differential privacy \cite{dp}. While we do not formally enforce differential privacy since we do not perform gradient clipping, the added noise serves to obscure individual updates.} We consider Gaussian noise with covariance matrix $\beta^{2}\myb{I}$. The $d$ elements of the noise vector are shared with Bob via the secret channel.

Figure \ref{fig:alpha} shows Eve's loss after the learning is completed and Figure \ref{fig:power} plots $P_{avg}$ for various schemes. We first compare the cases where all agents use $\myb{\gamma}_{k}^{\text{max}}(n)$ (labelled 'MaxModShift (Max)') and all agents use $\myb{\gamma}_{k}^{\text{min}}(n)$ (labelled 'MaxModShift (Min)') as a function of $\alpha$. We observe that choosing to minimize each shift gives a larger final loss for each $\alpha$ after learning is completed. Both these schemes are close in $P_{avg}$ with the difference being due to the power needed to transmit the scalar over the secret channel which is larger for the MaxModshift (Min) scheme. We also compare the above schemes with the Max scheme which is a ModShift shift scheme. Since it is not a function of $\alpha$ or $\beta$, it is parallel to the horizontal axis in both figures. We observe that MaxModShift (Min) leads to a higher final loss for Eve when compared to the Max scheme for all $\alpha$. While the Max scheme does lead to a higher loss for smaller $\alpha$ when compared to MaxModShift (Max) , it is dominated by MaxModShift (Max) for $\alpha \geq 1.2.$ Furthermore, the Max scheme require $5$ times more transmission power when compared to the case where no shifts are being added. Thus, both MaxModShift schemes lead to better performance with just $24\%$ of the transmission power required by ModShift. This demonstrates how MaxModShift is able to lead to larger model shifts while requiring lesser power when compared to ModShift.
\begin{figure}[t]
\begin{center}
\includegraphics[width=0.5\textwidth,height=2.5in]{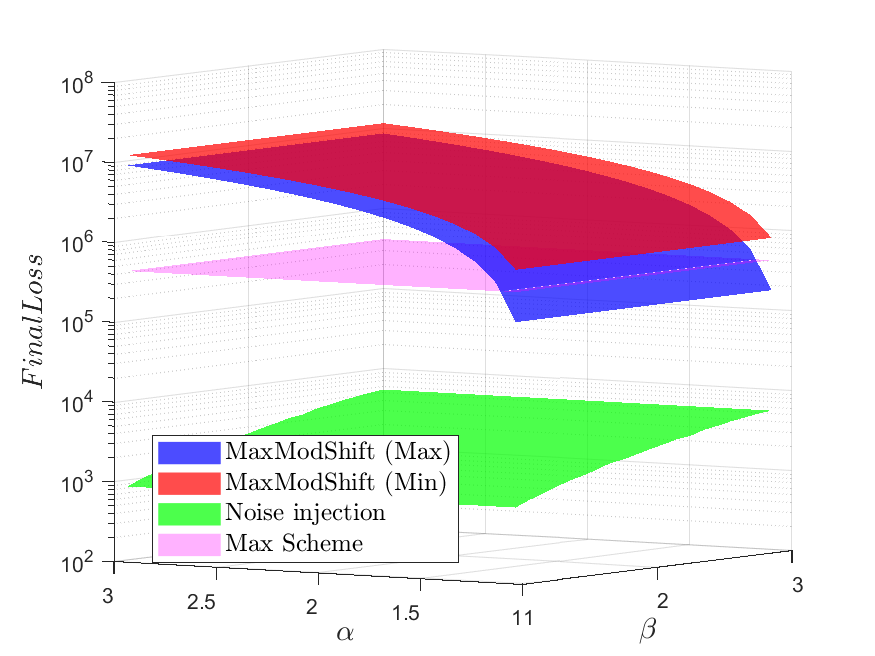}
\caption{Eve's final loss as a function of $\alpha$ and $\beta$}
\label{fig:alpha}
\vspace*{-0.2in}
\end{center}
\end{figure}

\begin{figure}[t]
\begin{center}
\includegraphics[width=0.5\textwidth,height=2.5in]{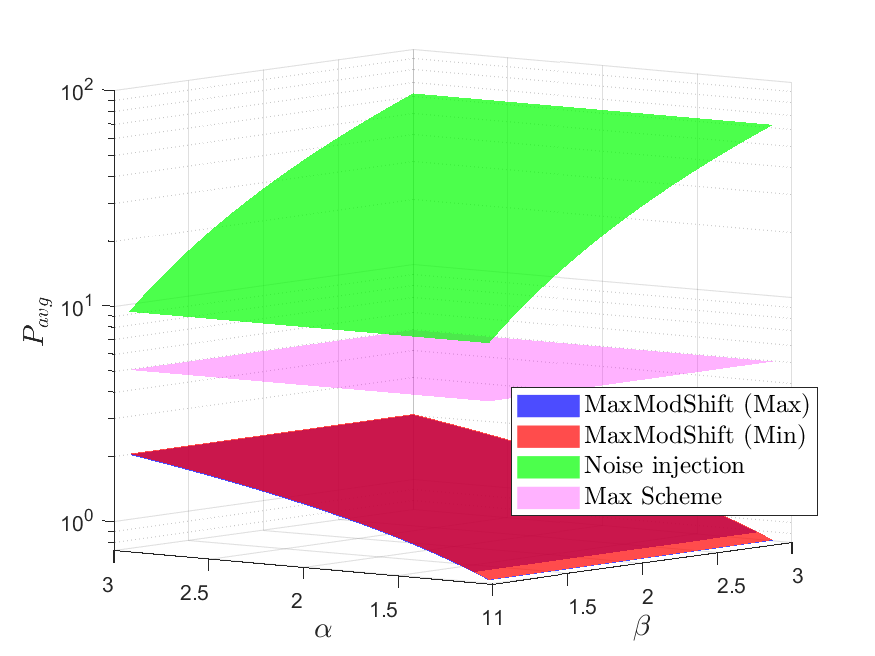}
\caption{$P_{avg}$ as a function of $\alpha$ and $\beta$}
\label{fig:power}
\end{center}
\vspace*{-0.2in}
\end{figure}

  Next, we compare MaxModShift with the noise injection scheme. We bias our analysis of the required average transmission power towards the noise injection scheme by assuming $s_{k,n} = 0 \hspace{5pt} \forall k,n$ for the noise injection scheme and show that MaxModShift offers better performance despite this. From figure \ref{fig:alpha}, we note that both MaxModShift schemes consistently lead Eve to a higher final loss than than the noise injection schemes. Moreover, the noise injection scheme requires much larger $P_{avg}$ and is at least $4$ times the highest $P_{avg}$ of the MaxModShift schemes.   $P_{avg}$ attains high values under noise injection largely due to $\left|\left|\myb{\delta}_{k}(n)\right|\right|$ getting smaller as $n$ increases while the noise variance does not scale down which implies that $\left|\left|\myb{\delta}_{k}^{E}(n)\right|\right|$ is eventually dominated by the noise term. Thus, the numerator of $P_{avg}$ does not scale down as the denominator reduces which leads to the large $P_{avg}$ values. In the case of MaxModShift, the power constraint in Equation  (\ref{c3}) ensures that the power required for transmission scales down with $\left|\left|\myb{\delta}_{k}^{E}(n)\right|\right|$. While $\beta$ may be posed as a function of time to reduce $P_{avg}$, we note that the final loss under the noise injection scheme is already roughly $4$ orders of magnitude lower than our scheme. This, added to the existing problem of the noise addition scheme requiring more secret channel usage, makes the noise addition scheme a less attractive shifting strategy.  

\section{Conclusions}
{In this paper, we have strongly generalized our prior designs \cite{modshift} for injecting model shifts into a federated learning problem for an eavesdropper. Herein, we generalize our shift design} and recompute the Fisher Information Matrix for the eavesdropper's learning problem as well as the conditions for driving this matrix to singularity.  {A new shift design which endeavors to maximize the model shift (and thus the eavesdropper's loss) is proposed and is subject to a power constraint. } The resulting design requires the selection between one of two shift parameters.  As with \cite{modshift}, the new designs pass a model tamper test that the eavesdropper may employ, thus rendering the use of the model shifts more secure. The new schemes are shown to be provably good and the overall algorithm converges.  In numerical comparisons against noise injection and ModShift, MaxModShift offers superior performance in hiding the model from an eavesdropper while requiring significantly lower transmission power. While our current solution makes a single choice before the learning starts, a greedy approach where agents must make decisions in each round based on feedback from Bob may be used to address the decision making problem. This requires characterizing the feedback shared by Bob and is left for future work.

\bibliographystyle{IEEEtran}
\bibliography{References/references}

\end{document}